\newcommand{\cmark}{\ding{51}}%
\newcommand{\xmark}{\ding{55}}%
\newcommand{\idct}{\mathbbm{1}}
\def\eqref#1{equation~\ref{#1}}
\def\1{\bm{1}}
\DeclareMathAlphabet{\mathsfit}{\encodingdefault}{\sfdefault}{m}{sl}
\SetMathAlphabet{\mathsfit}{bold}{\encodingdefault}{\sfdefault}{bx}{n}
\newcommand{\E}{\mathbb{E}}
\newcommand{\Var}{\mathrm{Var}}
\newcommand{\expnumber}[2]{{#1}\mathrm{e}{#2}}
\newtheorem{prop}{Proposition}
\newtheorem{theorem}{Theorem}
\theoremstyle{definition}
\newtheorem{mydef}{Definition}
\newcommand{\x}{\mathbf{x}}
\newcommand{\y}{\mathbf{y}}
\newcommand{\p}{\mathbf{p}}
\newcommand{\z}{\mathbf{z}}
\title{Fixup Initialization: \\Residual Learning Without Normalization}
\author{Hongyi Zhang\thanks{Work done at Facebook. Equal contribution.}\\
MIT\\
\texttt{hongyiz@mit.edu} \\
\And
Yann N. Dauphin\thanks{Work done at Facebook. Equal contribution.}\\
Google Brain \\
\texttt{yann@dauphin.io} \\
\And
Tengyu Ma\thanks{Work done at Facebook.}\\
Stanford University\\
\texttt{tengyuma@stanford.edu}
}
\begin{document}

\maketitle

\begin{abstract}
Normalization layers are a staple in state-of-the-art deep neural network architectures. They are widely believed to stabilize training, enable higher learning rate, accelerate convergence and improve generalization, though the reason for their effectiveness is still an active research topic. In this work, we challenge the commonly-held beliefs by showing that \emph{none} of the perceived benefits is unique to normalization. Specifically, we propose \emph{fixed-update initialization} (Fixup), an initialization motivated by solving the exploding and vanishing gradient problem at the beginning of training via properly rescaling a standard initialization. We find training residual networks with Fixup to be as stable as training with normalization --- even for networks with 10,000 layers. Furthermore, with proper regularization,  Fixup enables residual networks \emph{without} normalization to achieve state-of-the-art performance in image classification and machine translation.
\end{abstract}

\section{Introduction}
Artificial intelligence applications have witnessed major advances in recent years. At the core of this revolution is the development of novel neural network models and their training techniques. For example, since the landmark work of \cite{he2016deep}, most of the state-of-the-art image recognition systems are built upon a deep stack of network blocks consisting of convolutional layers and additive skip connections, with some normalization mechanism (e.g., batch normalization \citep{ioffe2015batch}) to facilitate training and generalization. Besides image classification, various normalization techniques \citep{ulyanov2016instance, ba2016layer, salimans2016weight, wu2018group} have been found essential to achieving good performance on other tasks, such as machine translation \citep{vaswani2017attention} and generative modeling \citep{zhu2017unpaired}. They are widely believed to have multiple benefits for training very deep neural networks, including stabilizing learning, enabling higher learning rate, accelerating convergence, and improving generalization.

Despite the enormous empirical success of training deep networks with normalization, and recent progress on understanding the working of batch normalization \citep{santurkar2018does}, there is currently no general consensus on why these normalization techniques help training residual neural networks. Intrigued by this topic, in this work we study 
\begin{enumerate}[noitemsep,topsep=0pt,parsep=0pt,partopsep=0pt, leftmargin=*, label=(\roman*)]
    \item \emph{without} normalization, can a deep residual network be trained reliably? (And if so,)
    \item \emph{without} normalization, can a deep residual network be trained with the same learning rate, converge at the same speed, and generalize equally well (or even better)?
\end{enumerate}

Perhaps surprisingly, we find the answers to both questions are \emph{Yes}. In particular, we show:
\begin{itemize}[leftmargin=*]
    \item {\bf Why normalization helps training.} We derive a lower bound for the gradient norm of a residual network at initialization, which explains why with standard initializations, normalization techniques are \emph{essential} for training deep residual networks at maximal learning rate. (\Cref{sec:problem})
    \item {\bf Training without normalization.} We propose Fixup, a method that rescales the standard initialization of residual branches by adjusting for the network architecture. Fixup enables training very deep residual networks stably at maximal learning rate without normalization. (\Cref{sec:zeroinit})
    \item {\bf Image classification.} We apply Fixup to replace batch normalization on image classification benchmarks CIFAR-10 (with Wide-ResNet) and ImageNet (with ResNet), and find Fixup with proper regularization matches the well-tuned baseline trained with normalization. (\Cref{sec:image-classification})
    \item {\bf Machine translation.} We apply Fixup to replace layer normalization on machine translation benchmarks IWSLT and WMT using the Transformer model, and find it outperforms the baseline and achieves new state-of-the-art results on the same architecture. (\Cref{sec:machine-translation})
\end{itemize}

\begin{figure}[htp]
    \centering
    \includegraphics[width=0.8\linewidth]{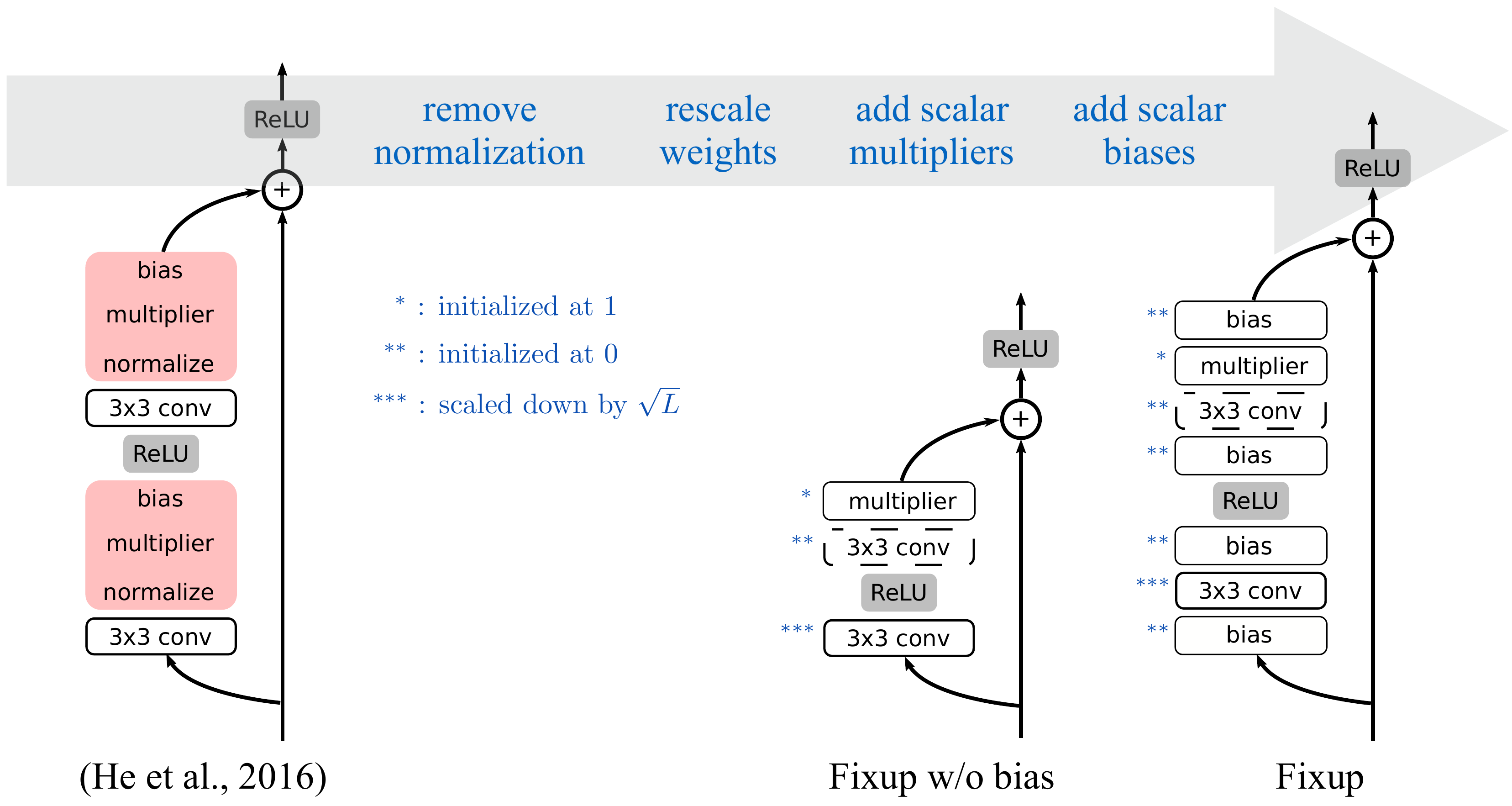}
    \caption{{\bf Left:} ResNet basic block. Batch normalization \citep{ioffe2015batch} layers are marked in red. {\bf Middle:} A simple network block that trains stably when stacked together. {\bf Right:} Fixup further improves by adding bias parameters. (See \Cref{sec:zeroinit} for details.)}
    \label{fig:basic_block} \vspace{-10pt}
\end{figure}

In the remaining of this paper, we first analyze the exploding gradient problem of residual networks at initialization in \Cref{sec:problem}. To solve this problem, we develop Fixup in \Cref{sec:zeroinit}. In \Cref{sec:experiments} we quantify the properties of Fixup and compare it against state-of-the-art normalization methods on real world benchmarks. A comparison with related work is presented in \Cref{sec:related}.

\section{Problem: ResNet with Standard Initializations Lead to Exploding Gradients}\label{sec:problem}

\newcommand{\bfx}{{\bf x}}

Standard initialization methods \citep{glorot2010understanding,he2015delving,xiao2018dynamical} attempt to set the initial parameters of the network such that the activations neither vanish nor explode. Unfortunately, it has been observed that without normalization techniques such as BatchNorm they do not account properly for the effect of residual connections and this causes exploding gradients. \citet{balduzzi2017shattered} characterizes this problem for ReLU networks, and we will generalize this to residual networks with positively homogenous activation functions. A plain (i.e. without normalization layers) ResNet with residual blocks $\{F_1, \ldots, F_L\}$ and input ${\bf x}_0$ computes the activations as
\begin{equation}\label{eqn:resnet}
{\bf x}_l = {\bf x}_0 + \sum_{i=0}^{l-1} F_i({\bf x}_i).
\end{equation}
\paragraph{ResNet output variance grows exponentially with depth.} Here we only consider the initialization, view the input ${\bf x}_0$ as fixed, and consider the randomness of the weight initialization. We analyze the variance of each layer $\bfx_l$, denoted by $\mathrm{Var}[{\bf x}_{l}]$ (which is technically defined as the sum of the variance of all the coordinates of ${\bf x}_l$.)
For simplicity we assume the blocks are initialized to be zero mean, i.e., $\mathbb{E} [F_l(\bfx_l) \mid \bfx_l] = 0$. By $\bfx_{l+1}= \bfx_{l}+ F_l(\bfx_l)$, and the law of total variance, we have $\Var[{\bf x}_{l+1}] = \E[\Var[F(\bfx_l) \vert \bfx_l]] + \Var( \bfx_l)$.
Resnet structure prevents $\bfx_l$ from vanishing by forcing the variance to grow with depth, i.e. $\mathrm{Var}[{\bf x}_{l}] < \mathrm{Var}[{\bf x}_{l+1}]$ if $\E[\Var[F(\bfx_l) \vert \bfx_l]] > 0$. Yet, combined with initialization methods such as \cite{he2015delving}, the output variance of each residual branch $\mathrm{Var}[F_l({\bf x}_l)\vert \bfx_l]$ will be about the same as its input variance $\mathrm{Var}[{\bf x}_l]$, and thus $\Var[\bfx_{l+1}] \approx 2 \Var[\bfx_{l}]$.
This causes the output variance to explode exponentially with depth without normalization \citep{hanin2018start}
for positively homogeneous blocks (see Definition \ref{def:first-degree-positively-homogeneous}).
This is detrimental to learning because it can in turn cause gradient explosion.

As we will show, at initialization, the gradient norm of certain activations and weight tensors is \emph{lower bounded} by the cross-entropy loss up to some constant. Intuitively, this implies that blowup in the logits will cause gradient explosion. Our result applies to convolutional and linear weights in a neural network with ReLU nonlinearity (e.g., feed-forward network, CNN), possibly with skip connections (e.g., ResNet, DenseNet), but without any normalization.

Our analysis utilizes properties of positively homogeneous functions, which we now introduce.
\begin{mydef}[positively homogeneous function of first degree] \label{def:first-degree-positively-homogeneous}
    A function $f:\mathbb{R}^m\to \mathbb{R}^n$ is called \emph{positively homogeneous (of first degree)} (p.h.) if for any input $\x\in\mathbb{R}^m$ and $\alpha > 0$, $f(\alpha\x) = \alpha f(\x)$.
\end{mydef}
\begin{mydef}[positively homogeneous set of first degree]
    Let $\theta=\{\theta_i\}_{i\in S}$ be the set of parameters of $f(\x)$ and $\theta_{ph}=\{\theta_i\}_{i\in S_{ph}\subset S}$. We call $\theta_{ph}$ a \emph{positively homogeneous set (of first degree)} (p.h. set) if for any $\alpha > 0$, $f(\x;\theta\setminus \theta_{ph}, \alpha \theta_{ph}) = \alpha f(\x;\theta\setminus \theta_{ph}, \theta_{ph})$, where $\alpha \theta_{ph}$ denotes $\{\alpha \theta_i\}_{i\in S_{ph}}$. 
\end{mydef}
Intuitively, a p.h. set is a set of parameters $\theta_{ph}$ in function $f$ such that for any fixed input $\x$ and fixed parameters $\theta\setminus \theta_{ph}$, $\bar{f}(\theta_{ph})\triangleq f(\x; \theta\setminus \theta_{ph}, \theta_{ph})$ is a p.h. function.

Examples of p.h. functions are ubiquitous in neural networks, including various kinds of linear operations without bias (fully-connected (FC) and convolution layers, pooling, addition, concatenation and dropout etc.) as well as ReLU nonlinearity. Moreover, we have the following claim:
\begin{prop} \label{thm:ph-composition}
    A function that is the composition of p.h. functions is itself p.h.
\end{prop}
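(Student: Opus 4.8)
The plan is to reduce the claim to the case of composing just two functions and then extend to arbitrary finite compositions by induction. So first I would take two p.h.\ functions $f:\mathbb{R}^n\to\mathbb{R}^k$ and $g:\mathbb{R}^m\to\mathbb{R}^n$ (the dimensions chosen so that $f\circ g$ is well defined) and verify directly that $f\circ g$ satisfies the defining identity in Definition~\ref{def:first-degree-positively-homogeneous}.

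The core step is a one-line chaining of the two homogeneity properties. For any input $\x\in\mathbb{R}^m$ and any scalar $\alpha>0$,
\[
(f\circ g)(\alpha\x) = f\big(g(\alpha\x)\big) = f\big(\alpha\, g(\x)\big) = \alpha\, f\big(g(\x)\big) = \alpha\,(f\circ g)(\x),
\]
where the second equality uses that $g$ is p.h.\ and the third uses that $f$ is p.h. The point that makes the third equality legitimate is precisely that $\alpha>0$: invoking the homogeneity of $f$ requires its argument to be scaled by a positive scalar, which is exactly what the homogeneity of $g$ produces in the previous step. Hence $f\circ g$ is itself p.h.

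For a composition of $L$ p.h.\ functions $f_L\circ\cdots\circ f_1$, I would then induct on $L$. The base case $L=1$ is immediate, and for the inductive step I would write $f_L\circ\cdots\circ f_1 = f_L\circ\big(f_{L-1}\circ\cdots\circ f_1\big)$, note that the inner composition is p.h.\ by the induction hypothesis, and apply the two-function result. There is no genuine obstacle here: the argument is purely algebraic, and the only bookkeeping worth checking is that domains and codomains match so that each composition is defined. One could optionally remark that this shows the class of p.h.\ functions (with compatible dimensions) is closed under composition, which is exactly what licenses treating an entire bias-free, normalization-free ReLU network as a single p.h.\ map.
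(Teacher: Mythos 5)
Your argument is correct and is exactly the standard one the paper has in mind (the paper states Proposition~\ref{thm:ph-composition} without proof, treating it as immediate): chain the two homogeneity identities for a pair of functions, noting that positivity of $\alpha$ is preserved so the outer function's homogeneity applies, then extend to finite compositions by induction. Nothing is missing.
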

We study classification problems with $c$ classes and the cross-entropy loss. We use $f$ to denote a neural network function except for the softmax layer. Cross-entropy loss is defined as $\ell(\mathbf{z},\y)\triangleq -\y^T(\mathbf{z}-\texttt{logsumexp}(\mathbf{z}))$ where $\y$ is the one-hot label vector, $\mathbf{z}\triangleq f(\x)\in \mathbb{R}^c$ is the logits where $z_i$ denotes its $i$-th element, and $\texttt{logsumexp}(\mathbf{z})\triangleq\log\left(\sum_{i\in[c]}\exp(z_i)\right)$. Consider a minibatch of training examples $\mathcal{D}_M=\{(\x^{(m)},\y^{(m)})\}_{m=1}^M$ and the average cross-entropy loss $\ell_{\mathrm{avg}}(\mathcal{D}_M)\triangleq\frac{1}{M}\sum_{m=1}^M\ell(f(\x^{(m)}), \y^{(m)})$, where we use $^{(m)}$ to index quantities referring to the $m$-th example. $\|\cdot\|$ denotes any valid norm. We only make the following assumptions about the network $f$:
\begin{enumerate}[noitemsep,topsep=0pt,parsep=0pt,partopsep=0pt, leftmargin=25pt]
    \item $f$ is a sequential composition of network blocks $\{f_i\}_{i=1}^L$, i.e. $f(\x_0) = f_L(f_{L-1}(\dots f_1(\x_0)))$, each of which is composed of p.h. functions.
    \item Weight elements in the FC layer are i.i.d. sampled from a zero-mean symmetric distribution.
\end{enumerate}
These assumptions hold at initialization if we remove all the normalization layers in a residual network with ReLU nonlinearity, assuming all the biases are initialized at $0$.

Our results are summarized in the following two theorems, whose proofs are listed in the appendix:
\begin{theorem}\label{thm:gradient-norm-activation}
    Denote the input to the $i$-th block by $\x_{i-1}$. With Assumption 1, we have
    \begin{equation}
        \left\|\frac{\partial\ell}{\partial\x_{i-1}}\right\| \ge \frac{\ell(\mathbf{z},\y) - H(\p)}{\|\x_{i-1}\|},
    \end{equation}
    where $\p$ is the softmax probabilities and $H$ denotes the Shannon entropy.
\end{theorem}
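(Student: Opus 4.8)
The plan is to exploit the positive homogeneity of the map from $\x_{i-1}$ to the logits and reduce the claim to a one-line application of Cauchy--Schwarz. First I would observe that, under Assumption 1 and \Cref{thm:ph-composition}, the composite map $g\triangleq f_L\circ\cdots\circ f_i$ that sends $\x_{i-1}$ to the logits $\mathbf{z}=g(\x_{i-1})$ is itself p.h. of first degree. The central tool is then Euler's identity for homogeneous functions: differentiating the relation $g(\alpha\x_{i-1})=\alpha\,g(\x_{i-1})$ with respect to $\alpha$ and evaluating at $\alpha=1$ gives, coordinatewise, $J\x_{i-1}=\mathbf{z}$, where $J=\partial\mathbf{z}/\partial\x_{i-1}$ is the Jacobian. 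For ReLU networks $g$ is only piecewise differentiable, so I would state this as holding wherever the gradient exists, which suffices since we evaluate at a fixed $\x_{i-1}$.

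Second, I would compute the loss gradient at the logit layer. A direct calculation from $\ell(\mathbf{z},\y)=-\y^T\mathbf{z}+\texttt{logsumexp}(\mathbf{z})$ (using that $\y$ is one-hot) yields the standard identity $\partial\ell/\partial\mathbf{z}=\p-\y$. Composing with the chain rule and the Euler identity above, I would form the inner product with $\x_{i-1}$:
\begin{equation*}
\left\langle \frac{\partial\ell}{\partial\x_{i-1}},\,\x_{i-1}\right\rangle
= \left(\frac{\partial\ell}{\partial\mathbf{z}}\right)^{\!T}\! J\,\x_{i-1}
= (\p-\y)^T\mathbf{z}.
\end{equation*}

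Third comes the algebraic glue: I would verify that $(\p-\y)^T\mathbf{z}=\ell(\mathbf{z},\y)-H(\p)$. Since $\y$ is one-hot, $\ell(\mathbf{z},\y)=-\y^T\mathbf{z}+\texttt{logsumexp}(\mathbf{z})$; and because $p_i=\exp(z_i-\texttt{logsumexp}(\mathbf{z}))$ with $\sum_i p_i=1$, we get $H(\p)=-\p^T\mathbf{z}+\texttt{logsumexp}(\mathbf{z})$. Subtracting cancels the $\texttt{logsumexp}$ terms and leaves exactly $(\p-\y)^T\mathbf{z}$. Finally, applying Cauchy--Schwarz to the inner product gives $\ell(\mathbf{z},\y)-H(\p)\le \|\partial\ell/\partial\x_{i-1}\|\,\|\x_{i-1}\|$, which rearranges to the claimed bound.

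I expect the only delicate point to be the invocation of Euler's identity on a non-smooth (ReLU) network: one must argue that positive homogeneity of degree one forces the Jacobian--vector product $J\x_{i-1}$ to equal $\mathbf{z}$ despite the kinks introduced by the nonlinearities. Everything else is a mechanical chain-rule computation and an elementary inequality, so the conceptual weight of the proof rests entirely on packaging the network above layer $i-1$ as a single p.h. function and differentiating the homogeneity relation.
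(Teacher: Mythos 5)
Your proposal is correct and is essentially the paper's own argument: the paper likewise exploits positive homogeneity of $f_{i\to L}$ by differentiating $\ell(f_{i\to L}((1+\epsilon)\x_{i-1}),\y)$ in $\epsilon$ at $\epsilon=0$ (which is exactly your Euler-identity computation $J\x_{i-1}=\mathbf{z}$ chained with $\partial\ell/\partial\mathbf{z}=\p-\y$), obtains $(\p-\y)^T\mathbf{z}=\ell(\mathbf{z},\y)-H(\p)$, and then bounds this directional derivative by the gradient norm, which is your Cauchy--Schwarz step. The non-smoothness caveat you flag is handled the same way (implicitly) in the paper, so there is no substantive difference between the two proofs.
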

Since $H(\p)$ is upper bounded by $\log(c)$ and $\|\x_{i-1}\|$ is small in the lower blocks, blowup in the loss will cause large gradient norm with respect to the lower block input. Our second theorem proves a lower bound on the gradient norm of a p.h. set in a network.
\begin{theorem}\label{thm:gradient-norm-weight}
    With Assumption 1, we have
    \begin{equation}
    	\left\|\frac{\partial\ell_{\mathrm{avg}}}{\partial\theta_{ph}}\right\| \ge \frac{1}{M\|\theta_{ph}\|} \sum_{m=1}^M \ell(\mathbf{z}^{(m)},\y^{(m)}) - H(\p^{(m)}) \triangleq G(\theta_{ph}).
    \end{equation}
    Furthermore, with Assumptions 1 and 2, we have
    \begin{equation}
        \mathbb{E}G(\theta_{ph}) \ge \frac{\mathbb{E}[\max_{i\in[c]}z_i]-\log(c)}{\|\theta_{ph}\|}.
    \end{equation} \vspace{-10pt}
\end{theorem}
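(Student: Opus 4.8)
The plan is to reduce the theorem to an Euler-type identity for positively homogeneous functions, followed by a single application of the Cauchy--Schwarz (more generally, dual-norm) inequality. The crucial observation is that, by the definition of a p.h. set, if we freeze the input and all parameters outside $\theta_{ph}$, then each logit vector $\mathbf{z}^{(m)} = f(\x^{(m)}; \theta\setminus\theta_{ph}, \theta_{ph})$ is a positively homogeneous function of degree one in $\theta_{ph}$ (using \Cref{thm:ph-composition} to propagate homogeneity through the block composition). Euler's homogeneous function theorem then yields the identity $\langle \partial\mathbf{z}^{(m)}/\partial\theta_{ph},\, \theta_{ph}\rangle = \mathbf{z}^{(m)}$, applied coordinatewise to the Jacobian. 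I would establish this first, since the entire argument hangs on it.

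For the first inequality, I would compose this identity with the chain rule. Writing $\partial\ell/\partial\theta_{ph} = (\partial\ell/\partial\mathbf{z})^\top(\partial\mathbf{z}/\partial\theta_{ph})$ and pairing against $\theta_{ph}$, the Euler identity collapses the $\theta_{ph}$ factor to give $\langle \partial\ell/\partial\theta_{ph},\, \theta_{ph}\rangle = \langle \partial\ell/\partial\mathbf{z},\, \mathbf{z}\rangle$. The decisive step is then the elementary algebraic evaluation of the right-hand side: since $\partial\ell/\partial\mathbf{z} = \p - \y$, and since $H(\p) = \texttt{logsumexp}(\mathbf{z}) - \p^\top\mathbf{z}$ while $\ell = \texttt{logsumexp}(\mathbf{z}) - \y^\top\mathbf{z}$, one obtains $\langle \p - \y,\, \mathbf{z}\rangle = \p^\top\mathbf{z} - \y^\top\mathbf{z} = \ell - H(\p)$. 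Summing this per-example identity over the minibatch (the parameters $\theta_{ph}$ are shared, so the gradients simply average) gives $\langle \partial\ell_{\mathrm{avg}}/\partial\theta_{ph},\, \theta_{ph}\rangle = \frac{1}{M}\sum_m (\ell^{(m)} - H(\p^{(m)}))$, and bounding the left side by $\|\partial\ell_{\mathrm{avg}}/\partial\theta_{ph}\|\,\|\theta_{ph}\|$ (interpreting the gradient norm as the dual of the norm on $\theta_{ph}$, which is automatic in the $\ell_2$ case) produces exactly $G(\theta_{ph})$.

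For the second inequality, I would again start from $\ell - H(\p) = \p^\top\mathbf{z} - \y^\top\mathbf{z}$. Using $\p^\top\mathbf{z} = \texttt{logsumexp}(\mathbf{z}) - H(\p)$ together with the standard bounds $\texttt{logsumexp}(\mathbf{z}) \ge \max_i z_i$ and $H(\p) \le \log c$, I get the deterministic estimate $\ell - H(\p) \ge \max_i z_i - \log c - \y^\top\mathbf{z}$. After taking expectations, it remains to show $\E[\y^\top\mathbf{z}] = 0$, i.e. that the expected logit of the true class vanishes. This is where Assumption 2 enters: conditioned on the input to the final FC layer (which is independent of its weights), each logit $z_i$ is a linear combination of i.i.d. zero-mean symmetric weights, hence is itself symmetric about zero, so $\E[z_i] = 0$ for every $i$, in particular for the true-label coordinate. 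Plugging this in and averaging over the minibatch (all examples share the same weight distribution, so $\E[\max_i z_i^{(m)}]$ is identical across $m$) yields the claimed bound.

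I expect the main obstacle to lie in making the Euler identity fully rigorous, rather than in the remaining algebra. Two points need care: first, p.h. functions built from ReLU are only piecewise differentiable, so the identity $\langle \nabla g,\, \theta\rangle = g$ must be justified almost everywhere (or through directional derivatives) and shown to be inherited under composition; second, the dual-norm bookkeeping in the Cauchy--Schwarz step should be stated cleanly for a general valid norm. The symmetry argument for $\E[\y^\top\mathbf{z}] = 0$ is the other delicate spot, since it quietly relies on the independence of the last-layer weights from the features feeding into them.
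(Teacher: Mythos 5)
Your proposal is correct and follows essentially the same route as the paper's proof: the Euler identity $\langle \partial\ell/\partial\theta_{ph},\,\theta_{ph}\rangle = \ell - H(\p)$ is exactly the paper's computation of $\left.\tfrac{\partial}{\partial\epsilon}\ell\bigl(\mathcal{D}_M;(1+\epsilon)\theta_{ph}\bigr)\right|_{\epsilon=0}$, followed by the same directional-derivative/dual-norm bound, and the expectation step likewise uses $\texttt{logsumexp}(\mathbf{z})\ge\max_i z_i$, $\mathbb{E}H(\p)\le\log c$, and the independence of $\y$ and $\mathbf{z}$ with $\mathbb{E}\mathbf{z}=\mathbf{0}$. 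The rigor caveats you flag (a.e.\ differentiability through ReLU, dual-norm bookkeeping) are real but are also left implicit in the paper.
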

It remains to identify such p.h. sets in a neural network. In \Cref{fig:ph-set-example} we provide three examples of p.h. sets in a ResNet without normalization. \Cref{thm:gradient-norm-weight} suggests that these layers would suffer from the exploding gradient problem, if the logits $\mathbf{z}$ blow up at initialization, which unfortunately would occur in a ResNet without normalization if initialized in a traditional way. This motivates us to introduce a new initialization in the next section.

\begin{figure}[htp]
    \centering
    \includegraphics[width=0.5\linewidth]{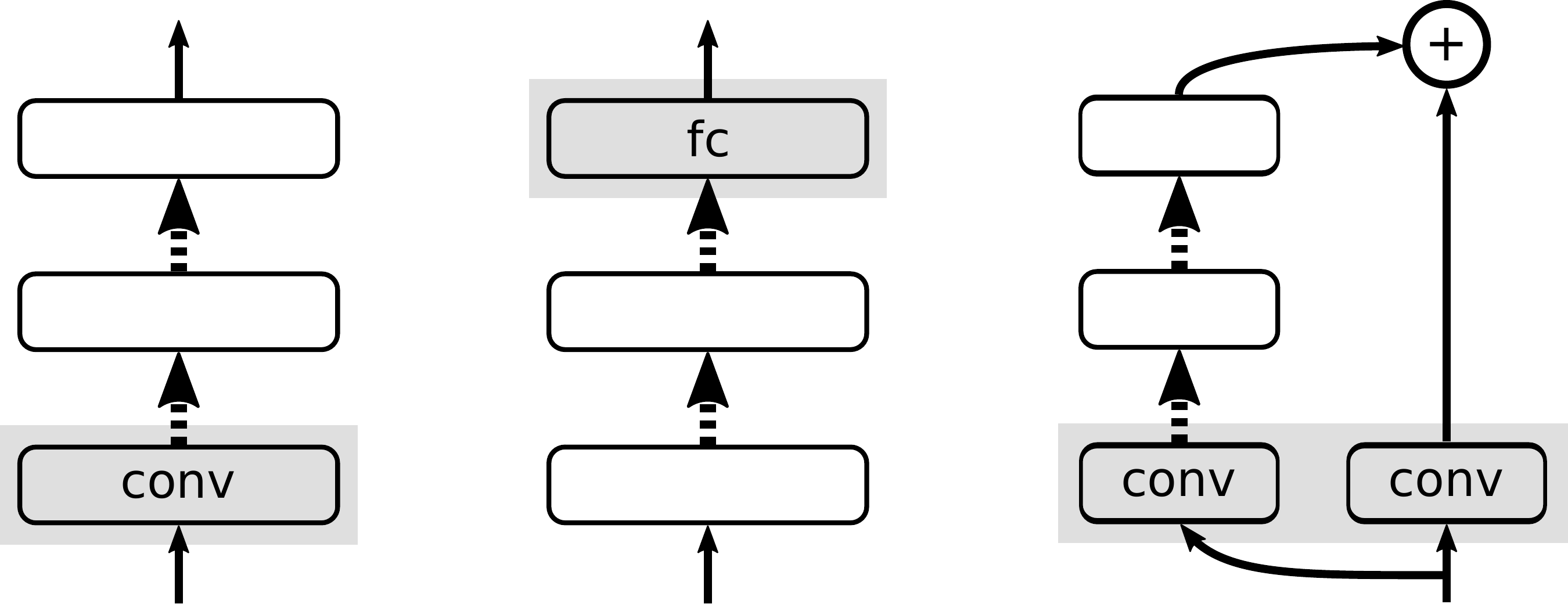} \caption{Examples of p.h. sets in a ResNet without normalization: (1) the first convolution layer before max pooling; (2) the fully connected layer before softmax; (3) the union of a spatial downsampling layer in the backbone and a convolution layer in its corresponding residual branch.}
    \label{fig:ph-set-example} \vspace{-10pt}
\end{figure}


\section{Fixup: Update a Residual Network $\Theta(\eta)$ per SGD Step} \label{sec:zeroinit}
Our analysis in the previous section points out the failure mode of standard initializations for training deep residual network: the gradient norm of certain layers is in expectation lower bounded by a quantity that increases indefinitely with the network depth. However, escaping this failure mode does not necessarily lead us to successful training --- after all, it is \emph{the whole network as a function} that we care about, rather than a layer or a network block. In this section, we propose a top-down design of a new initialization that ensures proper update scale to the network function, by simply rescaling a standard initialization. To start, we denote the learning rate by $\eta$ and set our goal:
\begin{tcolorbox}[enhanced,attach boxed title to top center={yshift=-3mm,yshifttext=-1mm},
  title=,colback=white, colframe=white!75!blue, coltitle=black, colbacktitle=white, fonttitle=\bfseries]
    \emph{$f(\x;\theta)$ is updated by $\Theta(\eta)$ per SGD step after initialization as $\eta\to 0$.} \\ That is, $\|\Delta f(\x)\| = \Theta(\eta)$ where $\Delta f(\x)\triangleq f(\x;\theta - \eta\frac{\partial}{\partial\theta}\ell(f(\x),\y)) - f(\x;\theta)$.
\end{tcolorbox}
Put another way, our goal is to design an initialization such that SGD updates to the network function are in the right scale and independent of the depth.

We define the \emph{Shortcut} as the shortest path from input to output in a residual network. The Shortcut is typically a shallow network with a few trainable layers.\footnote{For example, in the ResNet architecture (e.g., ResNet-50, ResNet-101 or ResNet-152) for ImageNet classification, the Shortcut is always a 6-layer network with five convolution layers and one fully-connected layer, irrespective of the total depth of the whole network.} We assume the Shortcut is initialized using a standard method, and focus on the initialization of the residual branches. 
\paragraph{Residual branches update the network in sync.} To start, we first make an important observation that the SGD update to each residual branch changes the network output in highly correlated directions. This implies that if a residual network has $L$ residual branches, then an SGD step to each residual branch should change the network output by $\Theta(\eta/L)$ on average to achieve an overall $\Theta(\eta)$ update. We defer the formal statement and its proof until \Cref{sec:sync-updates}.

\paragraph{Study of a scalar branch.} Next we study how to initialize a residual branch with $m$ layers so that its SGD update changes the network output by $\Theta(\eta/L)$. We assume $m$ is a small positive integer (e.g., 2 or 3). As we are only concerned about the scale of the update, it is sufficiently instructive to study the scalar case, i.e., $F(x) = \left(\prod_{i=1}^m a_i\right) x$ where $a_1,\dots,a_m,x\in\mathbb{R}^+$. For example, the standard initialization methods typically initialize each layer so that the output (after nonlinear activation) preserves the input variance, which can be modeled as setting $\forall i\in[m], a_i=1$. In turn, setting $a_i$ to a positive number other than $1$ corresponds to rescaling the i-th layer by $a_i$.

Through deriving the constraints for $F(x)$ to make $\Theta(\eta/L)$ updates, we will also discover how to rescale the weight layers of a standard initialization as desired. In particular, we show the SGD update to $F(x)$ is $\Theta(\eta/L)$ \emph{if and only if} the initialization satisfies the following constraint:
\begin{equation} \label{eq:scalar-branch-constraint}
    \left(\prod_{i\in[m]\setminus\{j\}} a_i\right)x = \Theta\left(\frac{1}{\sqrt{L}}\right),\quad \text{where}\quad  j\in\arg\min_k a_k
\end{equation}
We defer the derivation until \Cref{sec:scalar-branch}.

\Cref{eq:scalar-branch-constraint} suggests new methods to initialize a residual branch through \emph{rescaling the standard initialization of i-th layer in a residual branch by its corresponding scalar $a_i$}. For example, we could set $\forall i\in[m], a_i=L^{-\frac{1}{2m-2}}$. Alternatively, we could start the residual branch as a zero function by setting $a_m = 0$ and $\forall i\in[m-1], a_i=L^{-\frac{1}{2m-2}}$. In the second option, the residual branch does not need to ``unlearn'' its potentially bad random initial state, which can be beneficial for learning. Therefore, we use the latter option in our experiments, unless otherwise specified.

\paragraph{The effects of biases and multipliers.} With proper rescaling of the weights in all the residual branches, a residual network is supposed to be updated by $\Theta(\eta)$ per SGD step --- our goal is achieved. However, in order to match the training performance of a corresponding network with normalization, there are two more things to consider: biases and multipliers.

Using biases in the linear and convolution layers is a common practice. In normalization methods, bias and scale parameters are typically used to restore the representation power after normalization.\footnote{For example, in batch normalization gamma and beta parameters are used to affine-transform the normalized activations per each channel.} Intuitively, because the preferred input/output mean of a weight layer may be different from the preferred output/input mean of an activation layer, it also helps to insert bias terms in a residual network without normalization. Empirically, we find that inserting just one scalar bias before each weight layer and nonlinear activation layer significantly improves the training performance.

Multipliers scale the output of a residual branch, similar to the scale parameters in batch normalization. They have an interesting effect on the learning dynamics of weight layers in the same branch. Specifically, as the stochastic gradient of a layer is typically almost orthogonal to its weight, learning rate decay tends to cause the weight norm equilibrium to shrink when combined with L2 weight decay \citep{van2017l2}. In a branch with multipliers, this in turn causes the growth of the multipliers, increasing the effective learning rate of other layers. In particular, we observe that inserting just one scalar multiplier per residual branch mimics the weight norm dynamics of a network with normalization, and spares us the search of a new learning rate schedule.

Put together, we propose the following method to train residual networks without normalization:
\begin{tcolorbox}[enhanced,attach boxed title to top center={yshift=-3mm,yshifttext=-1mm}, title=Fixup initialization (or: How to train a deep residual network without normalization), colback=white, colframe=white!75!blue, coltitle=black, colbacktitle=white]
    \begin{enumerate}[leftmargin=*]
        \item Initialize the classification layer and the last layer of each residual branch to $0$.
        \item Initialize every other layer using a standard method (e.g., \cite{he2015delving}), and scale only the weight layers inside residual branches by $L^{-\frac{1}{2m-2}}$.
        \item Add a scalar multiplier (initialized at 1) in every branch and a scalar bias (initialized at 0) before each convolution, linear, and element-wise activation layer.
    \end{enumerate}
\end{tcolorbox}
It is important to note that Rule 2 of Fixup is the essential part as predicted by \Cref{eq:scalar-branch-constraint}. Indeed, we observe that using Rule 2 alone is sufficient and necessary for training extremely deep residual networks. On the other hand, Rule 1 and Rule 3 make further improvements for training so as to match the performance of a residual network with normalization layers, as we explain in the above text.\footnote{It is worth noting that the design of Fixup is a simplification of the common practice, in that we only introduce $O(K)$ parameters beyond convolution and linear weights (since we remove bias terms from convolution and linear layers), whereas the common practice includes $O(KC)$ \citep{ioffe2015batch,salimans2016weight} or $O(KCWH)$ \citep{ba2016layer} additional parameters, where $K$ is the number of layers, $C$ is the max number of channels per layer and $W, H$ are the spatial dimension of the largest feature maps.} We find ablation experiments confirm our claims (see \Cref{sec:ablation}).

Our initialization and network design is consistent with recent theoretical work~\cite{hardt2016identity,li2017algorithmic}, which, in much more simplified settings such as linearized residual nets and quadratic neural nets, propose that small initialization tend to stabilize optimization and help generalizaiton. However, our approach suggests that more delicate control of the scale of the initialization is beneficial.\footnote{For example, learning rate smaller than our choice would also stabilize the training, but lead to lower convergence rate.}


\section{Experiments} \label{sec:experiments}

\subsection{Training at increasing depth}

One of the key advatanges of BatchNorm is that it leads to fast training even for very deep models \citep{ioffe2015batch}. Here we will determine if we can match this desirable property by relying only on proper initialization.
We propose to evaluate how each method affects training very deep nets by \emph{measuring the test accuracy after the first epoch as we increase depth}. In particular, we use the wide residual network (WRN) architecture with width 1 and the default weight decay $\expnumber{5}{-4}$ \citep{zagoruyko2016wide}. We specifically use the default learning rate of $0.1$ because the ability to use high learning rates is considered to be important to the success of BatchNorm. We compare Fixup against three baseline methods --- (1) rescale the output of each residual block by $\frac{1}{\sqrt{2}}$ \citep{balduzzi2017shattered}, (2) post-process an orthogonal initialization such that the output variance of each residual block is close to 1 (Layer-sequential unit-variance orthogonal initialization, or LSUV) \citep{mishkin2015all}, (3) batch normalization \citep{ioffe2015batch}. We use the default batch size of 128 up to 1000 layers, with a batch size of 64 for 10,000 layers. We limit our budget of epochs to 1 due to the computational strain of evaluating models with up to 10,000 layers.

\begin{figure}[htp]
	\centering
	\includegraphics[width=0.65\textwidth]{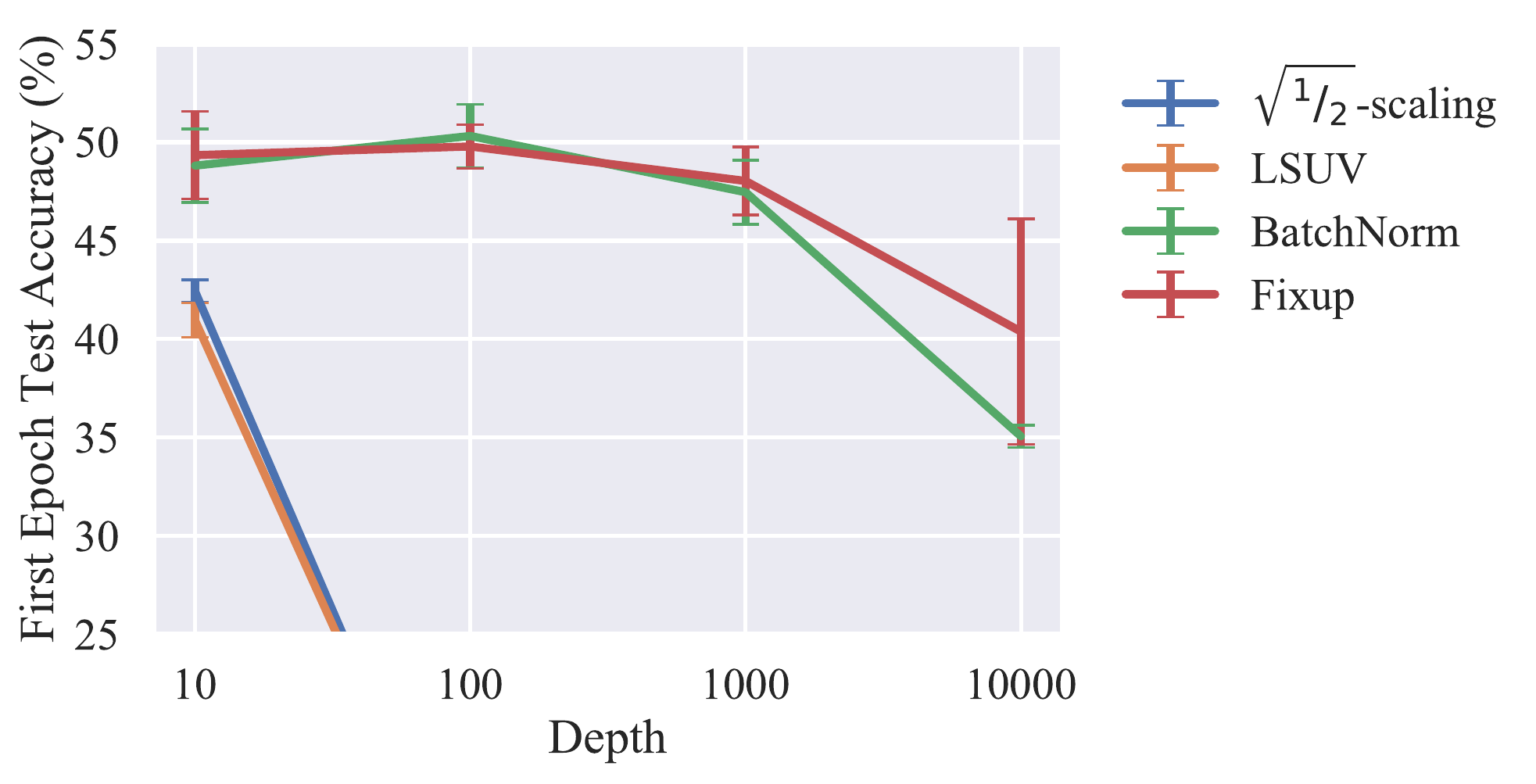}
    \caption{Depth of residual networks versus test accuracy at the first epoch for various methods on CIFAR-10 with the default BatchNorm learning rate. We observe that Fixup is able to train very deep networks with the same learning rate as batch normalization. (Higher is better.)}
    \label{fig:depth_cifar}
    \vspace{-3mm}
\end{figure}

Figure \ref{fig:depth_cifar} shows the test accuracy at the first epoch as depth increases. Observe that Fixup matches the performance of BatchNorm at the first epoch, even with 10,000 layers. 
LSUV and $\sqrt{^1/_2}$-scaling are not able to train with the same learning rate as BatchNorm past 100 layers. 

\subsection{Image classification} \label{sec:image-classification}

In this section, we evaluate the ability of Fixup to replace batch normalization in image classification applications. On the CIFAR-10 dataset, we first test on ResNet-110 \citep{he2016deep} with default hyper-parameters; results are shown in \Cref{tbl:cifar-resnet-110}. Fixup obtains 7\% relative improvement in test error compared with standard initialization; however, we note a substantial difference in the difficulty of training. While network with Fixup is trained with the same learning rate and converge as fast as network with batch normalization, we fail to train a Xavier initialized ResNet-110 with 0.1x maximal learning rate.\footnote{Personal communication with the authors of \citep{shang2017exploring} confirms our observation, and reveals that the Xavier initialized network need more epochs to converge.} The test error gap in \Cref{tbl:cifar-resnet-110} is likely due to the regularization effect of BatchNorm rather than difficulty in optimization; when we train Fixup networks with better regularization, the test error gap disappears and we obtain state-of-the-art results on CIFAR-10 and SVHN without normalization layers (see \Cref{sec:additional-cifar}).

\begin{table}[htp]
    \centering
    {\small
    \begin{tabular}{ llccr }
        \toprule
    	Dataset & ResNet-110 & Normalization & Large $\eta$ & Test Error ($\%$) \\
    	\midrule
    	\multirow{3}{*}{CIFAR-10} & w/ BatchNorm \citep{he2016deep} & \cmark & \cmark & 6.61 \\ 
    	\cmidrule(lr){2-5}
    	& w/ Xavier Init \citep{shang2017exploring} & \xmark & \xmark & 7.78 \\
    	& w/ Fixup-init & \xmark & \cmark & 7.24 \\
        \bottomrule
    \end{tabular}
    }
    \caption{Results on CIFAR-10 with ResNet-110 (mean/median of 5 runs; lower is better).} \label{tbl:cifar-resnet-110}
\end{table}

On the ImageNet dataset, we benchmark Fixup with the ResNet-50 and ResNet-101 architectures \citep{he2016deep}, trained for 100 epochs and 200 epochs respectively. Similar to our finding on the CIFAR-10 dataset, we observe that (1) training with Fixup is fast and stable with the default hyperparameters, (2) Fixup alone significantly improves the test error of standard initialization, and (3) there is a large test error gap between Fixup and BatchNorm. Further inspection reveals that Fixup initialized models obtain significantly lower training error compared with BatchNorm models (see \Cref{sec:additional-imagenet}), i.e., Fixup suffers from overfitting. We therefore apply stronger regularization to the Fixup models using Mixup \citep{zhang2017mixup}. We find it is beneficial to reduce the learning rate of the scalar multiplier and bias by $10$x when additional large regularization is used. Best Mixup coefficients are found through cross-validation: they are $0.2$, $0.1$ and $0.7$ for BatchNorm, GroupNorm \citep{wu2018group} and Fixup respectively. We present the results in \Cref{tbl:imagenet}, noting that with better regularization, the performance of Fixup is on par with GroupNorm.

\begin{table}[htp]
    \centering
    {\small
    \begin{tabular}{ llcr }
        \toprule
    	Model & Method & Normalization & Test Error ($\%$) \\
    	\midrule
    	 \multirow{6}{*}{ResNet-50} & BatchNorm \citep{goyal2017accurate}  & \multirow{3}{*}{\cmark} & 23.6 \\
    	& BatchNorm + Mixup \citep{zhang2017mixup} & & {\bf 23.3} \\
    	& GroupNorm + Mixup & & 23.9 \\
    	\cmidrule(lr){2-4}
    	& Xavier Init \citep{shang2017exploring} & \multirow{3}{*}{\xmark} & 31.5 \\
    	& Fixup-init & & 27.6 \\
    	& Fixup-init + Mixup & & {\bf 24.0} \\
    	\midrule
    	\multirow{4}{*}{ResNet-101} & BatchNorm \citep{zhang2017mixup}  & \multirow{3}{*}{\cmark} & 22.0 \\
    	& BatchNorm + Mixup \citep{zhang2017mixup} & & {\bf 20.8} \\
    	& GroupNorm + Mixup & & 21.4 \\
    	\cmidrule(lr){2-4}
    	& Fixup-init + Mixup & \xmark & 21.4 \\
        \bottomrule
    \end{tabular}
    }
    \caption{ImageNet test results using the ResNet architecture. (Lower is better.)} \label{tbl:imagenet}
\end{table}

\subsection{Machine translation} \label{sec:machine-translation}
To demonstrate the generality of Fixup, we also apply it to replace layer normalization \citep{ba2016layer} in Transformer \citep{vaswani2017attention}, a state-of-the-art neural network for machine translation. Specifically, we use the $\text{fairseq}$ library \citep{gehring2017convolutional} and follow the Fixup template in \Cref{sec:zeroinit} to modify the baseline model.
We evaluate on two standard machine translation datasets, IWSLT German-English (de-en) and WMT English-German (en-de) following the setup of \cite{ott2018scaling}. For the IWSLT de-en dataset, we cross-validate the dropout probability from $\{0.3, 0.4, 0.5, 0.6\}$ and find $0.5$ to be optimal for both Fixup and the LayerNorm baseline. For the WMT'16 en-de dataset, we use dropout probability $0.4$. All models are trained for $200$k updates.

It was reported \citep{chen2018best} that ``Layer normalization is most critical to stabilize the training process... removing layer normalization results in unstable training runs''. However we find training with Fixup to be very stable and as fast as the baseline model. Results are shown in \Cref{table:machine-translation}. Surprisingly, we find the models do not suffer from overfitting when LayerNorm is replaced by Fixup, thanks to the strong regularization effect of dropout. Instead, Fixup matches or supersedes the state-of-the-art results using Transformer model on both datasets.
\begin{table}[htp]
    \centering
    {\small
    \begin{tabular}{ llcr }
        \toprule
    	Dataset & Model & Normalization &  BLEU \\
    	\midrule
    	\multirow{3}{*}{IWSLT DE-EN} & \citep{deng2018latent} & \multirow{2}{*}{\cmark} & 33.1 \\
    	& LayerNorm & & $34.2$\\
    	\cmidrule(lr){2-4}
    	& Fixup-init & \xmark & {\bf 34.5} \\ 
    	\midrule
    	\multirow{3}{*}{WMT EN-DE} & \citep{vaswani2017attention} & \multirow{2}{*}{\cmark} & 28.4\\
    	& LayerNorm \citep{ott2018scaling} &  & {\bf 29.3}\\
    	\cmidrule(lr){2-4}
    	& Fixup-init & \xmark & {\bf 29.3}\\
        \bottomrule
    \end{tabular}
    }
    \caption{Comparing Fixup vs. LayerNorm for machine translation tasks. (Higher is better.)} \label{table:machine-translation} \vspace{-10pt}
\end{table}


\section{Related Work}\label{sec:related}
\paragraph{Normalization methods.} Normalization methods have enabled training very deep residual networks, and are currently an essential building block of the most successful deep learning architectures. All normalization methods for training neural networks explicitly normalize (i.e. standardize) some component (activations or weights) through dividing activations or weights by some real number computed from its statistics and/or subtracting some real number activation statistics (typically the mean) from the activations.\footnote{For reference, we include a brief history of normalization methods in \Cref{sec:normalization-history}.} In contrast, Fixup does not compute statistics (mean, variance or norm) at initialization or during any phase of training, hence is not a normalization method.

\paragraph{Theoretical analysis of deep networks.} Training very deep neural networks is an important theoretical problem. Early works study the propagation of variance in the forward and backward pass for different activation functions \citep{glorot2010understanding, he2015delving}. 

Recently, the study of \emph{dynamical isometry} \citep{saxe2013exact} provides a more detailed characterization of the forward and backward signal propogation at initialization \citep{pennington2017resurrecting,hanin2018neural}, enabling training 10,000-layer CNNs from scratch \citep{xiao2018dynamical}. For residual networks, activation scale \citep{hanin2018start}, gradient variance \citep{balduzzi2017shattered} and dynamical isometry property \citep{yang2017mean} have been studied. Our analysis in \Cref{sec:problem} leads to the similar conclusion as previous work that the standard initialization for residual networks is problematic.  However, our use of positive homogeneity for lower bounding the gradient norm of a neural network is novel, and applies to a broad class of neural network architectures (e.g., ResNet, DenseNet) and initialization methods (e.g., Xavier, LSUV) with simple assumptions and proof. 

\citet{hardt2016identity} analyze the optimization landscape (loss surface) of linearized residual nets in the neighborhood around the zero initialization where all the critical points are proved to be global minima. \citet{yang2017mean} study the effect of the initialization of residual nets to the test performance and pointed out Xavier or He initialization scheme is not optimal. In this paper, we give a concrete recipe for the initialization scheme with which we can train deep residual networks without batch normalization successfully.

\paragraph{Understanding batch normalization.} Despite its popularity in practice, batch normalization has not been well understood. \citet{ioffe2015batch} attributed its success to ``reducing internal covariate shift'', whereas \citet{santurkar2018does} argued that its effect may be ``smoothing loss surface''. Our analysis in Section 2 corroborates the latter idea of \citet{santurkar2018does} by showing that standard initialization leads to very steep loss surface at initialization. Moreover, we empirically showed in Section 3 that steep loss surface may be alleviated for residual networks by using smaller initialization than the standard ones such as Xavier or He's initialization in residual branches. \citet{van2017l2,hoffer2018norm} studied the effect of (batch) normalization and weight decay on the effective learning rate. Their results inspire us to include a multiplier in each residual branch. 



\paragraph{ResNet initialization in practice.} \cite{gehring2017convolutional,balduzzi2017shattered} proposed to address the initialization problem of residual nets by using the recurrence ${\bf x}_l = \sqrt{^1/_2}({\bf x}_{l-1} + F_l({\bf x}_{l-1}))$.
\cite{mishkin2015all} proposed a data-dependent initialization to mimic the effect of batch normalization in the first forward pass. While both methods limit the scale of activation and gradient, they would fail to train stably at the maximal learning rate for very deep residual networks, since they fail to consider the accumulation of highly correlated updates contributed by different residual branches to the network function (\Cref{sec:sync-updates}).
\cite{srivastava2015highway,hardt2016identity,goyal2017accurate,kingma2018glow} found that initializing the residual branches at (or close to) zero helped optimization. Our results support their observation in general, but \Cref{eq:scalar-branch-constraint} suggests additional subtleties when choosing a good initialization scheme. 

\section{Conclusion}
In this work, we study how to train a deep residual network reliably without normalization. Our theory in \Cref{sec:problem} suggests that the exploding gradient problem at initialization in a positively homogeneous network such as ResNet is directly linked to the blowup of logits. In \Cref{sec:zeroinit} we develop Fixup initialization to ensure the whole network as well as each residual branch gets updates of proper scale, based on a top-down analysis. Extensive experiments on real world datasets demonstrate that Fixup matches normalization techniques in training deep residual networks, and achieves state-of-the-art test performance with proper regularization.


Our work opens up new possibilities for both theory and applications. Can we analyze the training dynamics of Fixup, which may potentially be simpler than analyzing models with batch normalization is? Could we apply or extend the initialization scheme to other applications of deep learning? It would also be very interesting to understand the regularization benefits of various normalization methods, and to develop better regularizers to further improve the test performance of Fixup.

\subsubsection*{Acknowledgments}
The authors would like to thank Yuxin Wu, Kaiming He, Aleksander Madry and the anonymous reviewers for their helpful feedback.

\bibliography{main}
\bibliographystyle{iclr2019_conference}

\appendix
\section{Proofs for \Cref{sec:problem}}
\subsection{Gradient norm lower bound for the input to a network block}
\begin{proof}[Proof of \Cref{thm:gradient-norm-activation}]
We use $f_{i\to j}$ to denote the composition $f_j\circ f_{j-1}\circ \cdots\circ f_i$, so that $\mathbf{z} = f_{i\to L}(\x_{i-1})$ for all $i\in [L]$. Note that $\mathbf{z}$ is p.h. with respect to the input of each network block, i.e. $f_{i\to L}((1+\epsilon)\x_{i-1}) = (1+\epsilon)f_{i\to L}(\x_{i-1})$ for $\epsilon>-1$. This allows us to compute the gradient of the cross-entropy loss with respect to the scaling factor $\epsilon$ at $\epsilon=0$ as
\begin{equation}
	\left.\frac{\partial}{\partial\epsilon} \ell(f_{i\to L}((1+\epsilon)\x_{i-1}), \y)\right|_{\epsilon=0} = \frac{\partial\ell}{\partial\mathbf{z}}\frac{\partial f_{i\to L}}{\partial\epsilon} = - \y^T \mathbf{z} + \p^T \mathbf{z} = \ell(\mathbf{z},\y) - H(\p)
\end{equation}
Since the gradient $L_2$ norm $\left\|\nicefrac{\partial\ell}{\partial\x_{i-1}}\right\|$ must be greater than the directional derivative $\frac{\partial}{\partial t}\ell(f_{i\to L}(\x_{i-1} + t\frac{\x_{i-1}}{\|\x_{i-1}\|}), \y)$, defining $\epsilon=\nicefrac{t}{\|\x_{i-1}\|}$ we have
\begin{equation}
	\left\|\frac{\partial\ell}{\partial\x_{i-1}}\right\| \ge \frac{\partial}{\partial \epsilon}\ell(f_{i\to L}(\x_{i-1} + \epsilon\x_{i-1}), \y)\frac{\partial\epsilon}{\partial t} = \frac{\ell(\mathbf{z},\y) - H(\p)}{\|\x_{i-1}\|}.
\end{equation}
\end{proof}
\subsection{Gradient norm lower bound for positively homogeneous sets}
\begin{proof}[Proof of \Cref{thm:gradient-norm-weight}]
The proof idea is similar. Recall that if $\theta_{ph}$ is a p.h. set, then $\bar{f}^{(m)}(\theta_{ph})\triangleq f(\x^{(m)}; \theta\setminus \theta_{ph}, \theta_{ph})$ is a p.h. function. We therefore have
\begin{equation}
	\left.\frac{\partial}{\partial\epsilon} \ell_{\mathrm{avg}}(\mathcal{D}_M; (1+\epsilon)\theta_{ph})\right|_{\epsilon=0} = \frac{1}{M} \sum_{m=1}^M \frac{\partial\ell}{\partial\mathbf{z}^{(m)}}\frac{\partial \bar{f}^{(m)}}{\partial\epsilon} = \frac{1}{M}\sum_{m=1}^M \ell(\mathbf{z}^{(m)},\y^{(m)}) - H(\p^{(m)})
\end{equation}
hence we again invoke the directional derivative argument to show
\begin{equation}
	\left\|\frac{\partial\ell_{\mathrm{avg}}}{\partial\theta_{ph}}\right\| \ge \frac{1}{M\|\theta_{ph}\|} \sum_{m=1}^M \ell(\mathbf{z}^{(m)},\y^{(m)}) - H(\p^{(m)}) \triangleq G(\theta_{ph}).
\end{equation}
In order to estimate the scale of this lower bound, recall the FC layer weights are i.i.d. sampled from a symmetric, mean-zero distribution, therefore $\mathbf{z}$ has a symmetric probability density function with mean $\mathbf{0}$. We hence have
\begin{equation}
    \mathbb{E}\ell(\mathbf{z},\y) = \mathbb{E}[-\y^T(\mathbf{z}-\texttt{logsumexp}(\mathbf{z}))] \ge \mathbb{E}[\y^T(\textstyle\max_{i\in[c]}z_i-\mathbf{z})] = \mathbb{E}[\textstyle\max_{i\in[c]}z_i]
\end{equation}
where the inequality uses the fact that $\texttt{logsumexp}(\mathbf{z})\ge\max_{i\in[c]}z_i$; the last equality is due to $\y$ and $\mathbf{z}$ being independent at initialization and $\mathbb{E}\mathbf{z}=\mathbf{0}$. Using the trivial bound $\mathbb{E}H(\p)\le\log(c)$, we get
\begin{equation}
    \mathbb{E}G(\theta_{ph}) \ge \frac{\mathbb{E}[\max_{i\in[c]}z_i]-\log(c)}{\|\theta_{ph}\|}
\end{equation}
which shows that the gradient norm of a p.h. set is of the order $\Omega(\mathbb{E}[\max_{i\in[c]}z_i])$ at initialization.
\end{proof}

\section{Proofs for \Cref{sec:zeroinit}}
\subsection{Residual branches update the network in sync} \label{sec:sync-updates}
A common theme in previous analysis of residual networks is the scale of activation and gradient \citep{balduzzi2017shattered,yang2017mean,hanin2018start}. However, it is more important to consider the scale of actual change to the network function made by a (stochastic) gradient descent step. If the updates to different layers cancel out each other, the network would be stable as a whole despite drastic changes in different layers; if, on the other hand, the updates to different layers align with each other, the whole network may incur a drastic change in one step, even if each layer only changes a tiny amount. We now provide analysis showing that the latter scenario more accurately describes what happens in reality at initialization.

For our result in this section, we make the following assumptions:
\begin{itemize}[noitemsep,topsep=0pt,parsep=0pt,partopsep=0pt, leftmargin=25pt]
    \item  $f$ is a sequential composition of network blocks $\{f_i\}_{i=1}^L$, i.e. $f(\x_0) = f_L(f_{L-1}(\dots f_1(\x_0)))$, consisting of fully-connected weight layers, ReLU activation functions and residual branches.
    \item $f_L$ is a fully-connected layer with weights i.i.d. sampled from a zero-mean distribution.
    \item There is no bias parameter in $f$.
\end{itemize}
For $l<L$, let $\x_{l-1}$ be the input to $f_l$ and $F_l(\x_{l-1})$ be a branch in $f_l$ with $m_l$ layers. Without loss of generality, we study the following specific form of network architecture:
\begin{align*}
    F_l(\x_{l-1}) &~=~ (\overbrace{\text{ReLU}\circ W_l^{(m_l)}\circ\dots\circ\text{ReLU}\circ W_l^{(1)}}^{\mathclap{m_l\text{~ReLU}}})(\x_{l-1}), \\
    f_l(\x_{l-1}) &~=~ \x_{l-1} + F_l(\x_{l-1}).
\end{align*}
For the last block we denote $m_L=1$ and $f_L(\x_{L-1}) = F_L(\x_{L-1})=W_L^{(1)}\x_{L-1}$.

Furthermore, we always choose $0$ as the gradient of ReLU when its input is $0$. As such, with input $\x$, the output and gradient of $\text{ReLU}(\x)$ can be simply written as $D_{\idct[\x>0]}\x$, where $D_{\idct[\x>0]}$ is a diagonal matrix with diagonal entries corresponding to $\idct[\x>0]$. Denote the preactivation of the $i$-th layer (i.e. the input to the $i$-th ReLU) in the $l$-th block by $\x_l^{(i)}$. We define the following terms to simplify our presentation:
\begin{align*}
    F_l^{(i-)}~&\triangleq~ D_{\idct[\x_l^{(i-1)}>0]}W_l^{(i-1)}\cdots D_{\idct[\x_l^{(1)}>0]}W_l^{(1)}\x_{l-1}, \quad l<L, i\in[m_l] \\
    F_l^{(i+)}~&\triangleq~ D_{\idct[\x_l^{(m_l)}>0]}W_l^{(m_l)}\cdots D_{\idct[\x_l^{(i)}>0]}, \quad l<L, i\in[m_l]\\
    F_L^{(1-)}~&\triangleq~ \x_{L-1} \\
    F_L^{(1+)}~&\triangleq~ I
\end{align*}

We have the following result on the gradient update to $f$:
\begin{theorem}
    With the above assumptions, suppose we update the network parameters by $\Delta\theta = -\eta\frac{\partial}{\partial\theta} \ell(f(\x_0;\theta),\y)$, then the update to network output $\Delta f(\x_0)\triangleq f(\x_0;\theta+\Delta\theta) - f(\x_0;\theta)$ is
    {
    \begin{equation}
        \Delta f(\x_0) = -\eta\sum_{l=1}^{L}\left[\sum_{i=1}^{m_l}\overbrace{\left\|F_l^{(i-)}\right\|^2 \left(\frac{\partial f}{\partial\x_l}\right)^T F_l^{(i+)}\left(F_l^{(i+)}\right)^T\left(\frac{\partial f}{\partial\x_l}\right)}^{\mathclap{\triangleq J_l^i}}\right]\frac{\partial\ell}{\partial\z} + O(\eta^2),
    \end{equation}
    }
    where $\z\triangleq f(\x_0)\in\mathbb{R}^c$ is the logits.
\end{theorem}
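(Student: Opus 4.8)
The plan is to linearize the network output in the step size $\eta$ and then decompose the resulting tangent kernel weight-layer by weight-layer. Collecting all trainable weights into one vector $\theta$ and writing $\z=f(\x_0;\theta)$, the SGD step is $\Delta\theta=-\eta\,\partial\ell/\partial\theta=-\eta\,(\partial\z/\partial\theta)^{T}(\partial\ell/\partial\z)$, where the second equality is the chain rule through the logits (the loss depends on $\theta$ only via $\z$). A first-order Taylor expansion of $f(\x_0;\cdot)$ about $\theta$ then gives
\begin{equation}
\Delta f(\x_0) = \frac{\partial\z}{\partial\theta}\Delta\theta + O(\eta^2) = -\eta\,\frac{\partial\z}{\partial\theta}\Big(\frac{\partial\z}{\partial\theta}\Big)^{T}\frac{\partial\ell}{\partial\z} + O(\eta^2).
\end{equation}
Everything then reduces to identifying the $c\times c$ matrix $K\triangleq(\partial\z/\partial\theta)(\partial\z/\partial\theta)^{T}$ and showing $K=\sum_{l=1}^{L}\sum_{i=1}^{m_l}J_l^i$.

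The step I expect to require the most care is justifying the Taylor expansion despite the ReLU nonsmoothness. With the stated convention that the ReLU derivative is $0$ at the origin, for fixed input $\x_0$ the map $\theta\mapsto\z$ is, in any neighbourhood of the current $\theta$ on which no preactivation $\x_l^{(i)}$ crosses zero, a multilinear (hence $C^\infty$) function of the weights; since $\Delta\theta=O(\eta)$ perturbs every preactivation by $O(\eta)$, such a neighbourhood is nonempty for all sufficiently small $\eta$, and Taylor's theorem with the locally bounded Hessian supplies the $O(\eta^2)$ remainder. This is exactly where the ``as $\eta\to0$'' qualifier and the measure-zero exceptional set (preactivations landing exactly at $0$) enter, so it is the delicate point to state cleanly.

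The heart of the computation is the per-layer Jacobian. Because the weights split across blocks and across the $m_l$ layers within each block, $K=\sum_{l,i}J_{l,i}J_{l,i}^{T}$ with $J_{l,i}=\partial\z/\partial W_l^{(i)}$. I would factor the forward pass as $\x_l^{(i)}=W_l^{(i)}F_l^{(i-)}$, so that $F_l^{(i-)}$ is precisely the vector fed into $W_l^{(i)}$, and the backward pass using that $F_l^{(i+)}$ is by definition the product of ReLU masks and weights from layer $i$ to the branch output, while $\x_l=\x_{l-1}+F_l(\x_{l-1})$ contributes an identity Jacobian from branch output to block output. Differentiating entrywise gives
\begin{equation}
\frac{\partial z_k}{\partial (W_l^{(i)})_{ab}} = \Big((\tfrac{\partial f}{\partial\x_l})^{T}F_l^{(i+)}\Big)_{ka}\,(F_l^{(i-)})_b,
\end{equation}
an outer product in the index pair $(a,b)$.

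Summing the product of two such gradients over all entries $(a,b)$ factorizes as $\big(\sum_b (F_l^{(i-)})_b^{2}\big)\big(\sum_a(\cdot)_{ka}(\cdot)_{k'a}\big)$, which is exactly $J_{l,i}J_{l,i}^{T}=\|F_l^{(i-)}\|^{2}\,(\partial f/\partial\x_l)^{T}F_l^{(i+)}(F_l^{(i+)})^{T}(\partial f/\partial\x_l)=J_l^i$; the scalar $\|F_l^{(i-)}\|^{2}$ emerges precisely because weight-matrix gradients are rank-one outer products. Summing over $l$ and $i$ and substituting into the linearization yields the claimed formula. The remaining bookkeeping is routine: the last block with $m_L=1$, $F_L^{(1-)}=\x_{L-1}$, $F_L^{(1+)}=I$, and $\partial f/\partial\x_L=I$ fits the same pattern, giving $J_L^1=\|\x_{L-1}\|^{2}I$, which one checks directly from $\z=W_L^{(1)}\x_{L-1}$.
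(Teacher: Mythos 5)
Your proposal is correct and follows essentially the same route as the paper's proof: both linearize $f(\x_0;\theta+\Delta\theta)$ to first order in $\eta$ (using the same observation that, for fixed input and sufficiently small $\eta$, the ReLU masks freeze into diagonal matrices) and then identify $\Delta f$ with $-\eta$ times the Gram matrix of per-weight-layer Jacobians applied to $\partial\ell/\partial\z$. Your entrywise outer-product computation of $\partial z_k/\partial (W_l^{(i)})_{ab}$ is just the index form of the paper's Kronecker-product expression for $\partial\ell/\partial\,\mathrm{Vec}(W_l^{(i)})$, and your explicit verification of the last block and of the $\|F_l^{(i-)}\|^2$ factor fills in details the paper leaves terse.
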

Let us discuss the implecation of this result before delving into the proof. As each $J_l^i$ is a $c\times c$ real symmetric positive semi-definite matrix, the trace norm of each $J_l^i$ equals its trace. Similarly, the trace norm of $J\triangleq\sum_l\sum_i J_l^i$ equals the trace of the sum of all $J_l^i$ as well, which scales linearly with the number of residual branches $L$. Since the output $\z$ has no (or little) correlation with the target $\y$ at the start of training, $\frac{\partial\ell}{\partial\z}$ is a vector of some random direction. It then follows that the expected update scale is proportional to the trace norm of $J$, which is proportional to $L$ as well as the average trace of $J_l^i$. Simply put, to allow the whole network be updated by $\Theta(\eta)$ per step independent of depth, we need to ensure each residual branch contributes only a $\Theta(\eta/L)$ update on average.
\begin{proof}
    The first insight to prove our result is to note that conditioning on a specific input $\x_0$, we can replace each ReLU activation layer by a diagonal matrix and does not change the forward and backward pass. (In fact, this is valid even after we apply a gradient descent update, as long as the learning rate $\eta>0$ is sufficiently small so that all positive preactivation remains positive. This observation will be essential for our later analysis.) We thus have the gradient w.r.t. the $i$-th weight layer in the $l$-th block is
    \begin{equation} \label{eq:gradient-W}
        \frac{\partial\ell}{\partial \text{Vec}(W_l^{(i)})} = \frac{\partial\x_l}{\partial \text{Vec}(W_l^{(i)})}\cdot \frac{\partial f}{\partial\x_l}\cdot \frac{\partial\ell}{\partial\z} = \left(F_l^{(i-)}\otimes I_l^{(i)}\right)\left(F_l^{(i+)}\right)^T\frac{\partial f}{\partial\x_l}\cdot \frac{\partial\ell}{\partial\z}.
    \end{equation}
    where $\otimes$ denotes the Kronecker product. The second insight is to note that with our assumptions, a network block and its gradient w.r.t. its input have the following relation:
    \begin{equation} \label{eq:block-gradient-relation}
        f_l(\x_{l-1}) = \frac{\partial f_l}{\partial\x_{l-1}}\cdot\x_{l-1}.
    \end{equation}
    We then plug in \Cref{eq:gradient-W} to the gradient update $\Delta\theta = -\eta\frac{\partial}{\partial\theta} \ell(f(\x_0;\theta),\y)$, and recalculate the forward pass $f(\x_0;\theta+\Delta\theta)$. The theorem follows by applying \Cref{eq:block-gradient-relation} and a first-order Taylor series expansion in a small neighborhood of $\eta=0$ where $f(\x_0;\theta+\Delta\theta)$ is smooth w.r.t. $\eta$.
\end{proof}

\subsection{What scalar branch has $\Theta(\eta/L)$ updates?} \label{sec:scalar-branch}
For this section, we focus on the proper initialization of a scalar branch $F(x) = (\prod_{i=1}^m a_i)x$. We have the following result:
\begin{theorem}
    Assuming $\forall i, a_i\ge 0, x=\Theta(1)$ and $\frac{\partial\ell}{\partial F(x)}=\Theta(1)$, then $\Delta F(x)\triangleq F(x;\theta-\eta\frac{\partial\ell}{\partial\theta}) - F(x;\theta)$ is $\Theta(\eta/L)$ \emph{if and only if}
    \begin{equation} \label{eq:scalar-constraint}
        \left(\prod_{k\in[m]\setminus\{j\}} a_k\right)x = \Theta\left(\frac{1}{\sqrt{L}}\right),\quad \text{where}\quad  j\in\arg\min_k a_k
    \end{equation}
\end{theorem}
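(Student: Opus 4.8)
The plan is to carry out a single SGD step explicitly on the scalar branch and expand the resulting change $\Delta F(x)$ to leading order in $\eta$, thereby reducing the $\Theta(\eta/L)$ requirement to a transparent algebraic condition on the partial products $\prod_{k\neq i}a_k$.

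First I would introduce the shorthand $s\triangleq\partial\ell/\partial F(x)=\Theta(1)$ and $\pi_i\triangleq\prod_{k\in[m]\setminus\{i\}}a_k$, and compute each scalar gradient by the chain rule:
\[
\frac{\partial\ell}{\partial a_i}=s\,\frac{\partial F(x)}{\partial a_i}=s\,\pi_i\,x,\qquad \Delta a_i=-\eta\,s\,\pi_i\,x.
\]
Second I would recompute the forward pass with the updated weights $a_i+\Delta a_i$ and Taylor-expand $\prod_i(a_i+\Delta a_i)$ about $\eta=0$. Each $\Delta a_i=O(\eta)$, so every term containing two or more update factors is $O(\eta^2)$, and
\[
\Delta F(x)=\Big(\sum_{i=1}^{m}\Delta a_i\,\pi_i\Big)x+O(\eta^2)=-\eta\,s\,x^2\sum_{i=1}^{m}\pi_i^2+O(\eta^2).
\]
Since $s=\Theta(1)$ and $x=\Theta(1)$, letting $\eta\to 0$ makes the quadratic remainder negligible, so $|\Delta F(x)|=\Theta\big(\eta\sum_i\pi_i^2\big)$; hence $\Delta F(x)=\Theta(\eta/L)$ \emph{if and only if} $\sum_{i=1}^m\pi_i^2=\Theta(1/L)$.

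Third I would collapse the sum onto its dominant term. Because $m$ is a fixed constant, the sandwich $\max_i\pi_i^2\le\sum_i\pi_i^2\le m\max_i\pi_i^2$ yields $\sum_i\pi_i^2=\Theta(\max_i\pi_i^2)$. A one-line comparison identifies the maximizer: for $j\in\arg\min_k a_k$ and any $i$ one has $\pi_j/\pi_i=a_i/a_j\ge 1$ (both sides being $0$ in the degenerate case where a further factor vanishes), so $\pi_j=\max_i\pi_i$. Combining with $x=\Theta(1)$, the condition $\sum_i\pi_i^2=\Theta(1/L)$ is equivalent to $\pi_j x=\Theta(1/\sqrt L)$, which is exactly \Cref{eq:scalar-constraint}.

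The main obstacle---really the only delicate point---is propagating the equivalences faithfully through the $\Theta(\cdot)$ relations, in particular the boundary cases with vanishing weights: I must verify that when two or more $a_k$ are zero both sides of the stated equivalence are false (the leading term vanishes and the constraint is violated), while the practically relevant single-zero case (the Fixup choice $a_m=0$) is covered by the same identity $\pi_j=\max_i\pi_i$. The guarantee that $\Theta(\eta)$ is genuinely the leading order whenever $\sum_i\pi_i^2>0$---i.e. that the first-order coefficient does not accidentally cancel---rests precisely on the assumptions $s=\Theta(1)$ and $x=\Theta(1)$.
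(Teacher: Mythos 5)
Your proposal is correct and follows essentially the same route as the paper's own proof: compute $\partial\ell/\partial a_i$, take the first-order expansion $\Delta F(x)=-\eta s x^2\sum_i\pi_i^2+O(\eta^2)$ (the paper writes this as $-\eta\,\frac{\partial\ell}{\partial F(x)}(F(x))^2\sum_i a_i^{-2}$ with a convention for $a_i=0$), and sandwich the sum by its largest term, which corresponds to $j\in\arg\min_k a_k$. Your explicit handling of the vanishing-weight cases via the $\pi_i$ notation is a minor presentational improvement over the paper's $F(x)/a_i$ convention, but the argument is the same.
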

\begin{proof}
    We start by calculating the gradient of each parameter:
    \begin{equation}
        \frac{\partial\ell}{\partial a_i} = \frac{\partial\ell}{\partial F}\left(\prod_{k\in[m]\setminus\{i\}} a_k\right)x
    \end{equation}
    and a first-order approximation of $\Delta F(x)$:
    \begin{equation} \label{eq:Delta-F}
        \Delta F(x) = -\eta \frac{\partial\ell}{\partial F(x)}\left(F(x)\right)^2\sum_{i=1}^m \frac{1}{a_i^2}
    \end{equation}
    where we conveniently abuse some notations by defining
    \begin{equation}
        F(x)\frac{1}{a_i}\triangleq\left(\prod_{k\in[m]\setminus\{i\}}a_k\right)x, \quad \text{if~} a_i = 0.
    \end{equation}
    Denote $\sum_{i=1}^m \frac{1}{a_i^2}$ as $M$ and $\min_k a_k$ as $A$, we have
    \begin{equation} \label{eq:A-M-bound}
        (F(x))^2\cdot\frac{1}{A^2}\le (F(x))^2M\le (F(x))^2\cdot\frac{m}{A^2}
    \end{equation}
    and therefore by rearranging \Cref{eq:Delta-F} and letting $\Delta F(x)=\Theta(\eta/L)$ we get
    \begin{equation}
        (F(x))^2\cdot\frac{1}{A^2} = \Theta\left(\frac{\Delta F(x)}{\eta\frac{\partial\ell}{\partial F(x)}}\right) = \Theta\left(\frac{1}{L}\right)
    \end{equation}
    i.e. $F(x)/A = \Theta(1/\sqrt{L})$. Hence the ``only if'' part is proved. For the ``if'' part, we apply \Cref{eq:A-M-bound} to \Cref{eq:Delta-F} and observe that by \Cref{eq:scalar-constraint}
    \begin{equation}
        \Delta F(x) = \Theta\left(\eta(F(x))^2\cdot\frac{1}{A^2}\right) = \Theta\left(\frac{\eta}{L}\right)
    \end{equation}
\end{proof}
The result of this theorem provides useful guidance on how to rescale the standard initialization to achieve the desired update scale for the network function.

\section{Additional experiments}
\subsection{Ablation studies of Fixup} \label{sec:ablation}
In this section we present the training curves of different architecture designs and initialization schemes. Specifically, we compare the training accuracy of batch normalization, Fixup, as well as a few ablated options: (1) removing the bias parameters in the network; (2) use $0.1$x the suggested initialization scale and no bias parameters; (3) use $10$x the suggested initialization scale and no bias parameters; and (4) remove all the residual branches. The results are shown in \Cref{fig:cifar10-train-acc}. We see that initializing the residual branch layers at a smaller scale (or all zero) slows down learning, whereas training fails when initializing them at a larger scale; we also see the clear benefit of adding bias parameters in the network.
\begin{figure}[htp]
	\centering
	\includegraphics[width=0.6\textwidth]{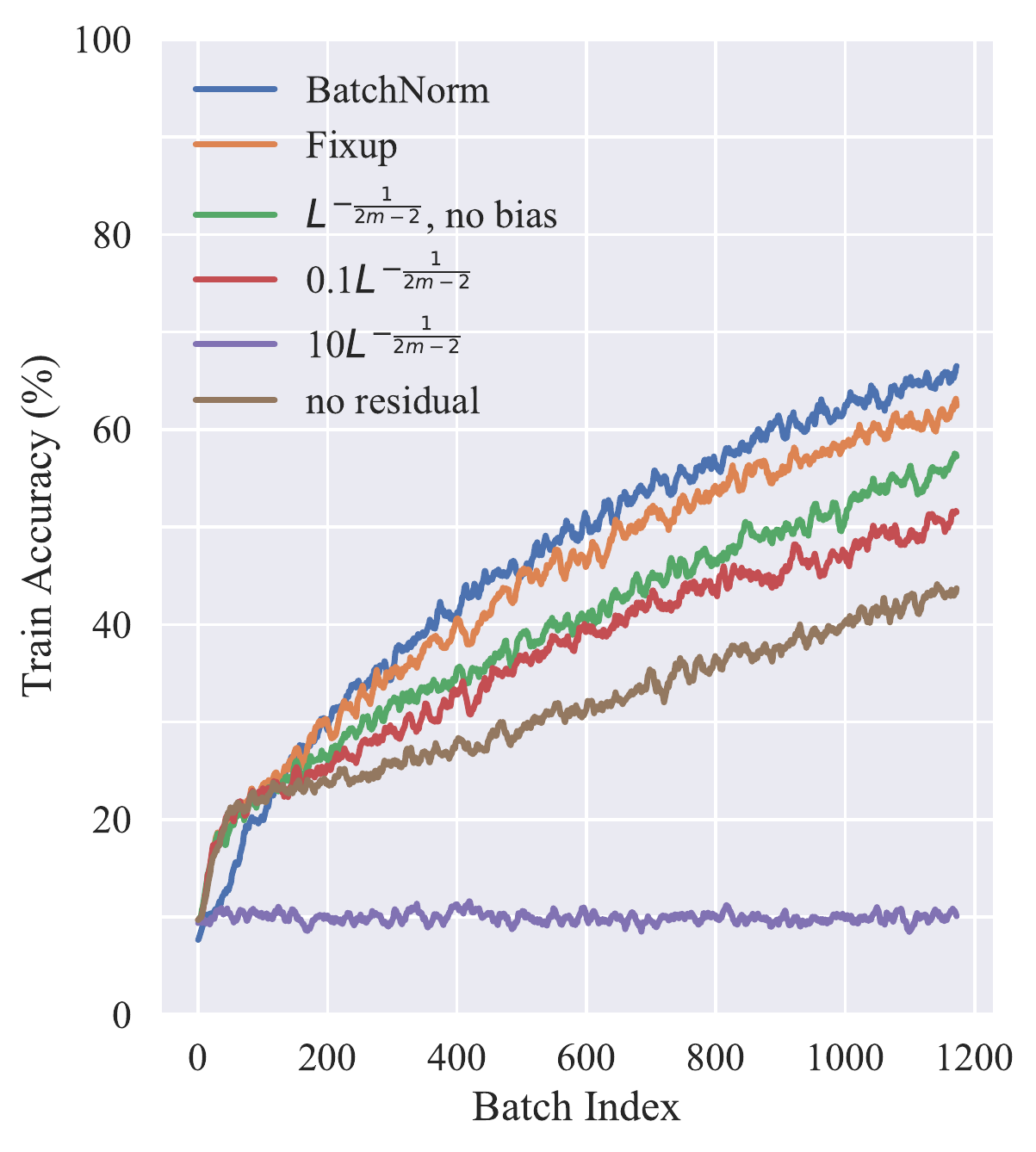}
    \caption{Minibatch training accuracy of ResNet-110 on CIFAR-10 dataset with different configurations in the first 3 epochs. We use minibatch size of 128 and smooth the curves using 10-step moving average.} \label{fig:cifar10-train-acc}
\end{figure}

\subsection{CIFAR and SVHN with better regularization} \label{sec:additional-cifar}

We perform additional experiments to validate our hypothesis that the gap in test error between Fixup and batch normalization is primarily due to overfitting. To combat overfitting, we use Mixup \citep{zhang2017mixup} and Cutout \citep{devries2017cutout} with default hyperparameters as additional regularization. On the CIFAR-10 dataset, we perform experiments with WideResNet-40-10 and on SVHN we use WideResNet-16-12 \citep{zagoruyko2016wide}, all with the default hyperparameters. We observe in \Cref{tbl:cifar-svhn} that models trained with Fixup and strong regularization are competitive with state-of-the-art methods on CIFAR-10 and SVHN, as well as our baseline with batch normalization. 

\begin{table}[htp]
    \centering
    \begin{tabular}{ llcr }
        \toprule
    	Dataset & Model & Normalization & Test Error ($\%$) \\
    	\midrule
    	\multirow{5}{*}{CIFAR-10} & \citep{zagoruyko2016wide} & \multirow{3}{*}{Yes} & $3.8$ \\ 
    	& \citep{yamada2018shakedrop} & & {\bf 2.3} \\
    	& BatchNorm + Mixup + Cutout & & 2.5\\
    	\cmidrule(lr){2-4}
    	& \citep{graham2014fractional} & \multirow{2}{*}{No} & 3.5 \\
    	& Fixup-init + Mixup + Cutout &  & {\bf 2.3} \\
    	\midrule
    	\multirow{5}{*}{SVHN} & \citep{zagoruyko2016wide} & \multirow{3}{*}{Yes} & 1.5 \\
    	& \citep{devries2017cutout} & & {\bf 1.3} \\
    	& BatchNorm + Mixup + Cutout & & 1.4 \\
    	\cmidrule(lr){2-4}
    	& \citep{lee2016generalizing} & \multirow{2}{*}{No} & 1.7 \\
    	& Fixup-init + Mixup + Cutout & & {\bf 1.4} \\
        \bottomrule
    \end{tabular}
    \caption{Additional results on CIFAR-10, SVHN datasets.} \label{tbl:cifar-svhn}
\end{table}

\subsection{Training and test curves on ImageNet} \label{sec:additional-imagenet}
Figure \ref{fig:imagenet} shows that without additional regularization Fixup fits the training set very well, but overfits significantly. We see in \Cref{fig:imagenet_mixup} that Fixup is competitive with networks trained with normalization when the Mixup regularizer is used.

\begin{figure}[htp]
	\centering
	\begin{subfigure}{0.45\textwidth}
		\includegraphics[width=\textwidth]{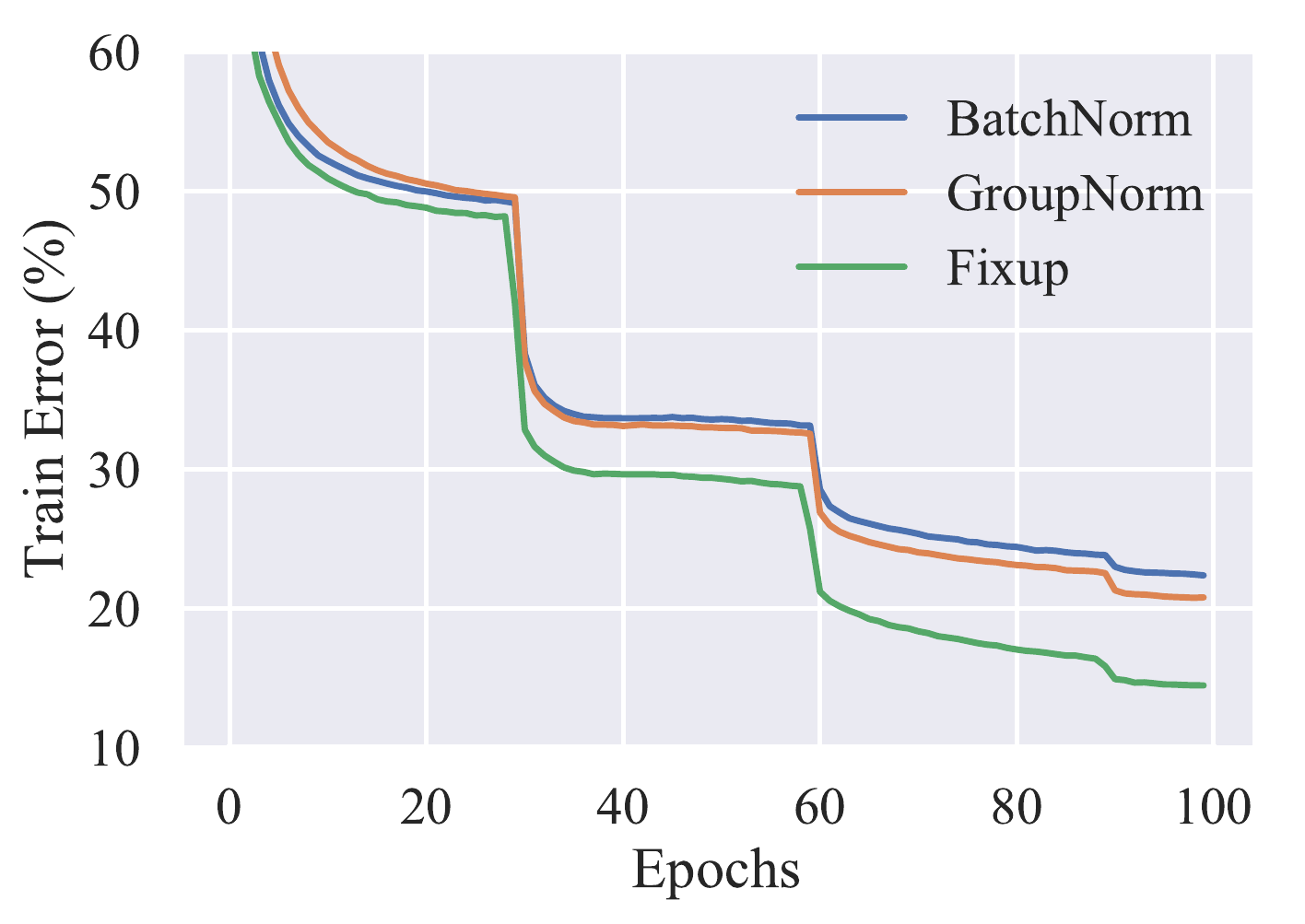}
	\end{subfigure}
	\hfill
	\begin{subfigure}{0.45\textwidth}
		\includegraphics[width=\textwidth]{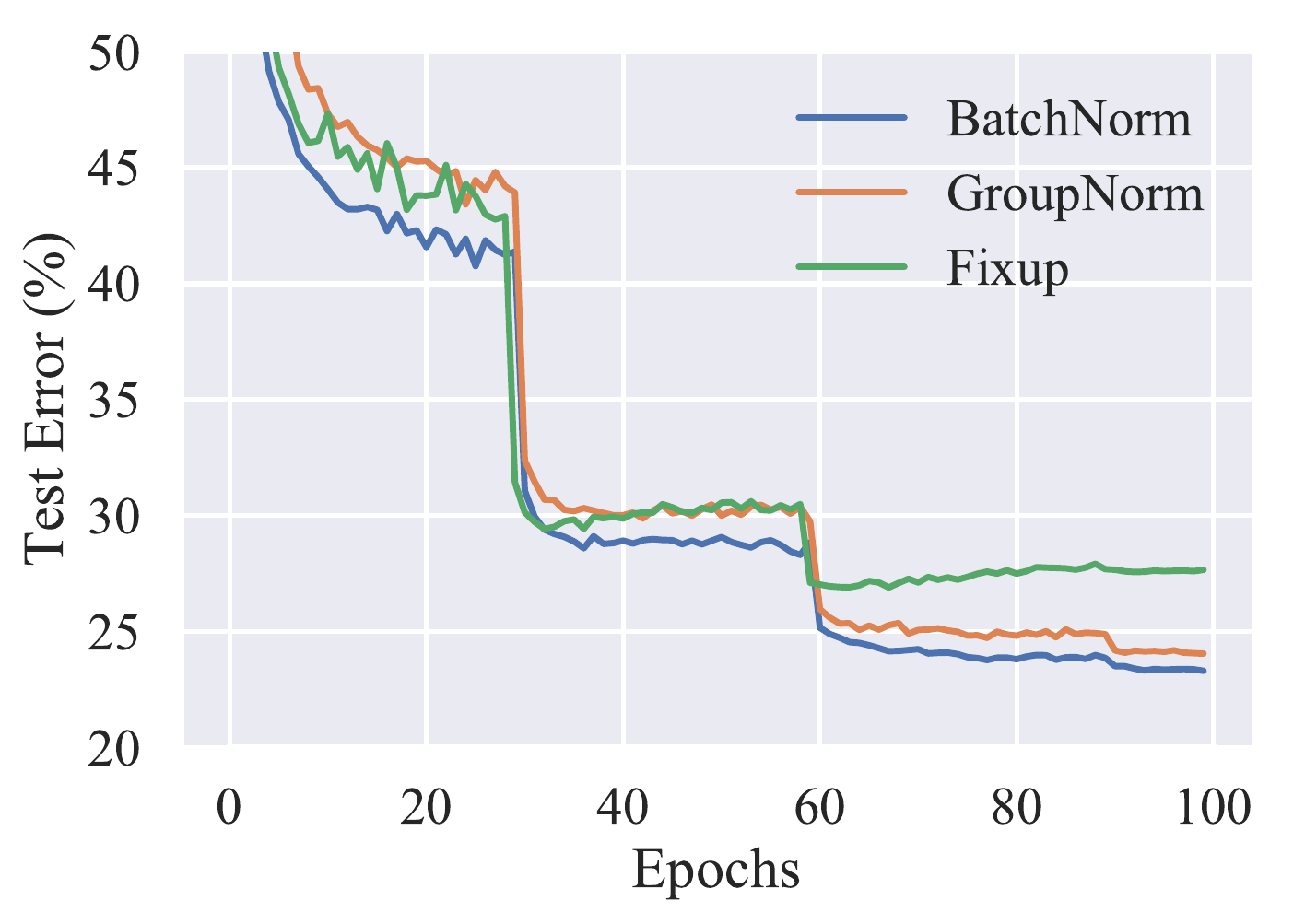}
	\end{subfigure}
    \caption{Training and test errors on ImageNet using ResNet-50 without additional regularization. We observe that Fixup is able to better fit the training data and that leads to overfitting - more regularization is needed. Results of BatchNorm and GroupNorm reproduced from \citep{wu2018group}.}
    \label{fig:imagenet}\vspace{-10pt}
\end{figure}

\begin{figure}[htp]
	\centering
	\includegraphics[width=0.7\textwidth]{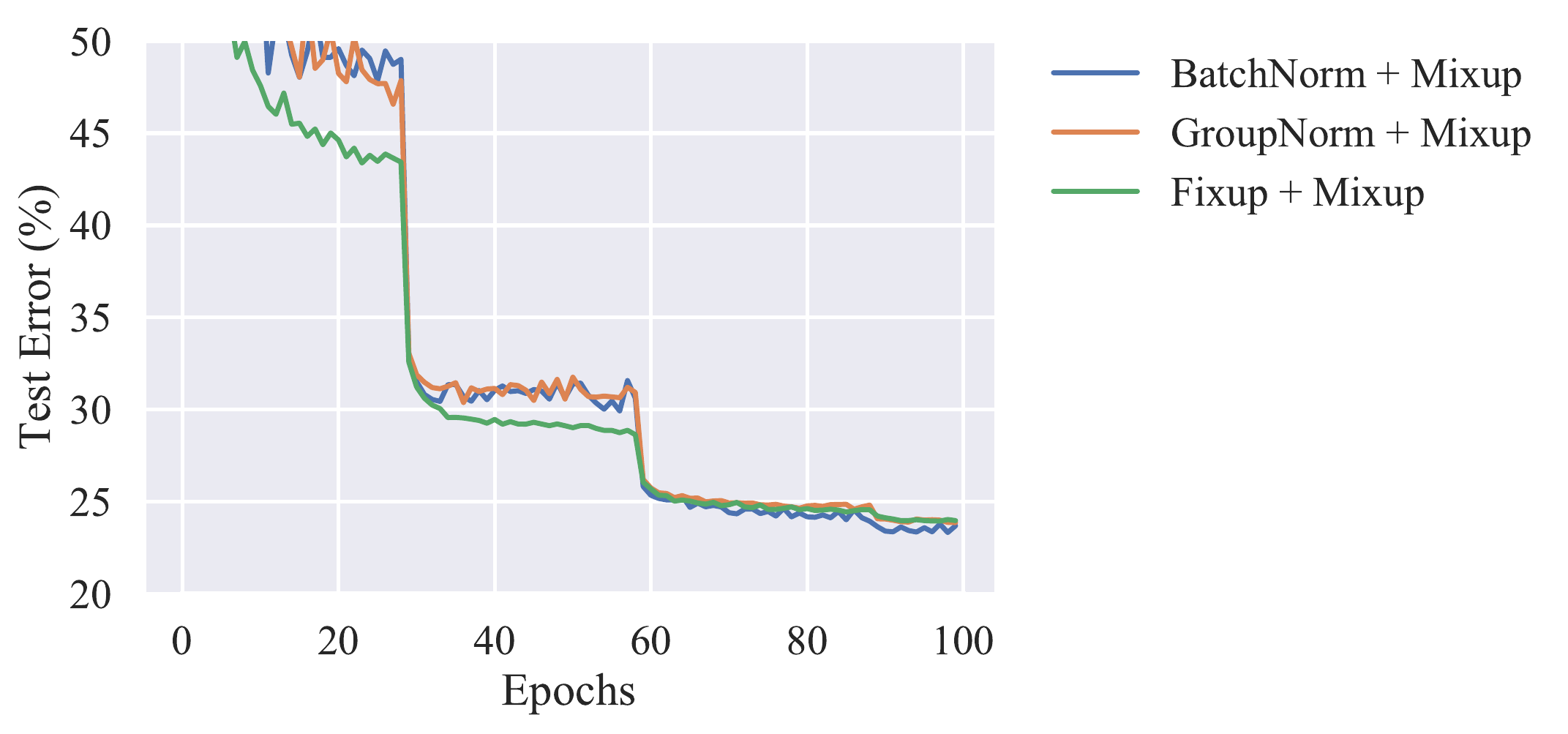}
    \caption{Test error of ResNet-50 on ImageNet with Mixup \citep{zhang2017mixup}. Fixup closely matches the final results yielded by the use of GroupNorm, without any normalization.}
    \label{fig:imagenet_mixup}
\end{figure}

\section{Additional references: A brief history of normalization methods} \label{sec:normalization-history}
The first use of normalization in neural networks appears in the modeling of biological visual system and dates back at least to \citet{heeger1992normalization} in neuroscience and to \citet{pinto2008real,lyu2008nonlinear} in computer vision, where each neuron output is divided by the sum (or norm) of all of the outputs, a module called divisive normalization. Recent popular normalization methods, such as local response normalization \citep{krizhevsky2012imagenet}, batch normalization \citep{ioffe2015batch} and layer normalization \citep{ba2016layer} mostly follow this tradition of dividing the neuron activations by their certain summary statistics, often also with the activation mean subtracted. An exception is weight normalization \citep{salimans2016weight}, which instead divides the weight parameters by their statistics, specifically the weight norm; weight normalization also adopts the idea of activation normalization for weight initialization. The recently proposed actnorm \citep{kingma2018glow} removes the normalization of weight parameters, but still use activation normalization to initialize the affine transformation layers.

\end{document}